\newcommand{\cmark}{\textcolor{darkgreen}{\ding{51}}}%
\newcommand{\xmark}{\textcolor{red}{\ding{55}}}%
\newcolumntype{C}[1]{>{\centering\arraybackslash}p{#1}}
\newcommand{\R}{\mathbb{R}}
\newcommand{\La}{\mathcal{L}}
\newcommand{\norm}[1]{\left\lVert#1\right\rVert}
\newcommand{\ra}[1]{\renewcommand{\arraystretch}{#1}}
\newtheorem{lemma}{Lemma}
\newcommand{\C}{\mathbf{C}}
\newcommand{\A}{\mathbf{A}}
\newcommand{\idd}[1]{\mathit{I}_{#1}}
\DeclareMathOperator*{\argmin}{arg\,min}
\definecolor{darkgreen}{rgb}{0,0.5,0}
\definecolor{purple}{rgb}{1,0,1}
\newcommand\blfootnote[1]{
  \begingroup
  \renewcommand\thefootnote{}\footnote{#1}
  \addtocounter{footnote}{-1}
  \endgroup
}
\definecolor{cvprblue}{rgb}{0.21,0.49,0.74}
\title{Unsupervised Representation Learning for Diverse Deformable Shape Collections}
\author{
Sara Hahner $^{\star,1,2}$\\
\and
 Souhaib Attaiki $^{\star,3}$\\
\and
 Jochen Garcke $^{1,2}$
\and
 Maks Ovsjanikov $^{3}$
\and
 \normalsize{
 $^{1}$Fraunhofer SCAI,}\\
 \normalsize{Sankt Augustin, Germany}
\and
 \normalsize{
 $^{2}$Institute for Numerical Simulation,}\\
 \normalsize{University of Bonn, Germany
 }
\and
 \normalsize{
 $^{3}$LIX, École Polytechnique,}\\
 \normalsize{Institut Polytechnique de Paris, France}
}
\begin{document}
\maketitle

\blfootnote{
Preprint. Accepted at International Conference on 3D Vision 2024.\\
($^{\star}$) denotes equal contribution
}

\begin{abstract}

We introduce a novel learning-based method for encoding and manipulating 3D surface meshes. Our method is specifically designed to create an interpretable embedding space for deformable shape collections. Unlike previous 3D mesh autoencoders that require meshes to be in a 1-to-1 correspondence, our approach is trained on diverse meshes in an unsupervised manner. Central to our method is a spectral pooling technique that establishes a universal latent space, breaking free from traditional constraints of mesh connectivity and shape categories. The entire process consists of two stages. In the first stage, we employ the functional map paradigm to extract point-to-point (p2p) maps between a collection of shapes in an unsupervised manner. These p2p maps are then utilized to construct a common latent space, which ensures straightforward interpretation and independence from mesh connectivity and shape category. Through extensive experiments, we demonstrate that our method achieves excellent reconstructions and produces more realistic and smoother interpolations than baseline approaches. 

\end{abstract}
\section{Introduction}

Encoding, analyzing, and manipulating 3D surface meshes is a pivotal challenge in 3D computer vision. With the increasing prominence of diverse mesh datasets encompassing humans, animals, and CAD elements, the importance of addressing this issue extends to various applications. These include mesh encoding to low-dimensional latent space \cite{Litany2018}, computer-aided engineering \cite{Hahner2022b}, and mesh generation \cite{Zhou2020}.

Autoencoders have emerged as a potential solution to this challenge. Standard mesh autoencoders, e.g., \cite{Ranjan2018,Bouritsas2019,Yuan2020,Zhou2020}, begin by calculating vertex-wise features. 
They then down-sample the mesh using an encoder to compress the shape representation before reconstructing the original mesh with a decoder.
Alternate strategies, like \cite{Hahner2021,Hahner2022b}, implement autoencoding by initially remeshing input meshes to a semi-regular structure. 
Their autoencoder then handles local patches instead of entire meshes and the added remeshing step often compromises reconstruction quality. 

A significant limitation of autoencoders handling meshes is their requirement for meshes in the shape collections to have a 1-to-1 correspondence, meaning all meshes must utilize the same triangulation—a costly and often impractical demand.
Moreover, accurately determining correspondences across geometric objects is crucial for numerous computer vision and graphic challenges \cite{Bogo2014,Pishchulin2017,Zhou2016,Dinh2005}. 
Various methods have been developed to address this, with the functional map approach \cite{Ovsjanikov2012} showing particular promise. 
Both supervised \cite{donati2020deep,attaiki2021dpfm,Marin2020CorrespondenceLV,marin22_why,Sharp2020} and unsupervised \cite{sharma2020weakly,eisenberger2020deep,halimi2019unsupervised} methods have achieved state-of-the-art results in this area. 
Yet, these shape-matching techniques have not been adapted to mesh autoencoding challenges, necessitating the aforementioned 1-to-1 correspondence.

On the other hand, creating a unified and interpretable embedding space for meshes poses another challenge. 
Techniques that down-sample the input mesh, lead to an embedding space dependent on mesh connectivity. 
Others employ mean or max pooling for vertex features to generate a global shape feature, but this may not always create smooth embedding manifolds.

In our study, we address the above-mentioned issues with a novel mesh autoencoder, trained in an entirely unsupervised manner, that forms a universal latent space unaffected by the shape type or mesh connectivity, enhancing interpretability. For this, we introduce a spectral pooling method to establish this shared space, relying on point-to-point maps between shapes. Advocating for unsupervised methods, we utilize the functional maps pipeline \cite{Ovsjanikov2012} to extract these maps, allowing us to define an embedding space that transcends mesh connectivity and shape categories. The generated shape features reside on a smooth manifold, facilitating interpretable sampling for mesh generation.

\begin{figure*}[!t]
    \centering
    \includegraphics[width=\textwidth]{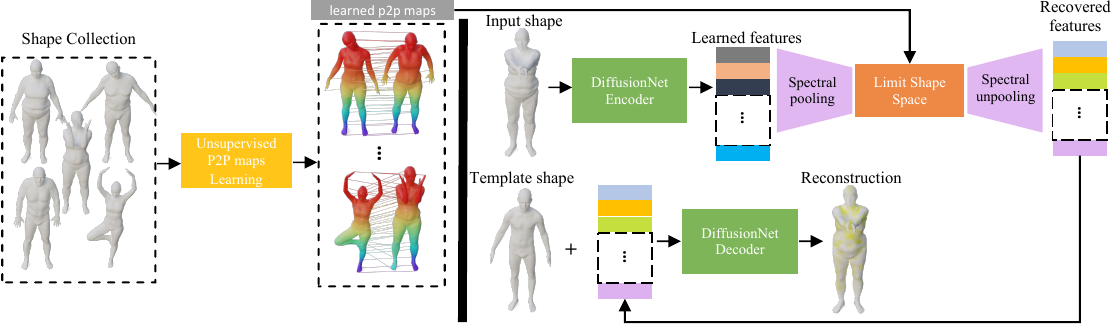}
    \caption{\textbf{Method overview} Our method consists of two stages. In the first stage (left), we train a deep functional map network to extract p2p maps between the input shapes (see Section \ref{sec:stage1}). These p2p maps are then used for spectral pooling by constructing the limit shape space, which is used as the embedding space in our mesh autoencoder in stage 2 (right, see Sections
    \ref{sec:spectral_pooling} and \ref{sec:stage2}).}
    \label{fig:pipeline}
    \vspace{-1.3em}
\end{figure*}

Overall, our primary contributions are:
\begin{itemize}
\item The introduction of a spectral pooling method that disregards mesh connectivity, yielding a shared embedding space for diverse mesh types and categories.
\item A pioneering unsupervised training technique to obtain a mesh autoencoder independent of a fixed mesh template.
\item Demonstrations showcasing our method's capacity to reconstruct superior-quality meshes and generate an interpretable embedding space optimal for shape sampling and manipulation.
\end{itemize}
To facilitate further research and reproducibility, we will provide our code and data upon acceptance.

\section{Related Work}

In this section, we review previous works related to our research.
We organize them into three main categories.

\vspace{-0.3em}

\paragraph{Mesh Autoencoders and Generative Models}

In \cite{Litany2018} and \cite{Ranjan2018} (CoMA), some of the first mesh autoencoders have been introduced.
The authors of CoMA, the Neural3DMM network \cite{Bouritsas2019} and \cite{Yuan2020} utilize mesh downsampling and mesh upsampling layers for pooling and unpooling, which are combined with either spectral or spiral convolutional layers. 
By manually choosing latent vertices for the embedding space, \cite{Zhou2020} defines a MeshConv autoencoder that allows interpolating in the latent space. 
All the above-mentioned mesh convolutional autoencoders work only for collections of meshes with the same connectivity because the pooling and/or convolutional layers depend on the adjacency matrix. 
The authors of \cite{Hahner2021} (MASER) introduce a patch-based approach. 
The meshes have to be remeshed to semi-regular mesh connectivity. The resulting regular patches are input separately to an autoencoder using spatial convolution, allowing for an analysis of meshes of different sizes. 
CoSMA \cite{Hahner2022b} combines this patch-based approach with Chebyshev convolutional filters \cite{Defferrard2016} on the patches.
The MeshCNN architecture~\cite{Hanocka2019} can be implemented as an encoder and decoder.
Nevertheless, the pooling is feature-dependent and therefore, the embeddings can be of different significance.

For surfaces that are represented as signed distance functions and in other implicit representations, \cite{Gropp2020} and \cite{Park2019} achieve good results in shape reconstruction and completion. 
Nevertheless, their generalization and scalability are often limited to a small set of deformations and require big training data.
Another parallel line of work is representation learning on point clouds \cite{Achlioptas2018,Qin2019,Zhao2019}. 
In theory, surface meshes can be handled by these methods when disregarding the faces defining the surface mesh. 
However, these methods only reconstruct and generate point clouds, which is a different and more straightforward task compared to what our work aims for because of their permutation invariance.

The compact representation of the input data by the autoencoder can be used for data generation and manipulation. 
The features are randomly sampled or combined linearly, generating shapes in positions that the user controls.
\cite{Foti2022,Hahner2022b,Ranjan2018} show mesh generative results by sampling from an autoencoder's or variational autoencoder's mesh feature space.
Other generative approaches \cite{Tan2018a,Yang2023} rely on a non-learned deformation representation of meshes of fixed connectivity. 

\vspace{-0.2cm}

\paragraph{Shape Matching} Shape matching has been extensively studied in computer graphics. 
While a comprehensive review is beyond the scope of this paper, interested readers can refer to recent surveys \cite{guo2016comprehensive,bronstein2017geometric,guo2020deep} for a more in-depth discussion. One of the methods most related to our work is the functional map pipeline, which was introduced in \cite{Ovsjanikov2012,Ovsjanikov2017} and has since been extended in many follow-up works \cite{Ren2019,eynard2016coupled,Melzi_2019,attaiki2021dpfm,attaiki2023clover,attaiki2022ncp,attaiki2023vader}. The main advantage of this method is that it transforms the problem of optimizing for a point-to-point map (which is quadratic in the number of vertices) into the optimization of a functional map (which consists of small quadratic matrices), making the optimization process feasible. To find the functional map, earlier works relied on hand-crafted feature functions defined on source and target shapes, such as HKS 
\cite{sun2009concise}, WKS \cite{aubry2011wave}, or SHOT \cite{Salti2014} features. Follow-up research improved the pipeline by introducing additional regularization \cite{Nogneng2017,kovnatsky2013coupled}, 
and proposing efficient refinement methods \cite{Ren2019}. More recently, the functional map pipeline has been incorporated into deep learning, with the seminal work of \cite{litany2017deep} and subsequent works \cite{donati2020deep,sharma2020weakly,attaiki2021dpfm,li2022srfeat} using differentiable functional map losses and regularization to learn feature functions with neural networks. 
Another line of work focused on making the learning unsupervised \cite{roufosse2019unsupervised,eisenberger2020deep,attaiki2022ncp}, which can be useful in the absence of ground truth correspondences. This was achieved by imposing structural properties such as bijectivity and orthonormality on functional maps in the reduced basis \cite{roufosse2019unsupervised}, penalizing the geodesic distortion of the predicted maps \cite{halimi2019unsupervised}, or combining intrinsic and extrinsic shape alignment \cite{eisenberger2020deep}. However, all of these works focused on establishing correspondences and did not investigate any relationship with shape reconstruction or generation.

\vspace{-0.2cm}

\paragraph{Structure of the Feature Space}

The representation learned by an autoencoder typically resides in a lower-dimensional representation space than the input. In this work, our goal is to create a representation space that is shared among different mesh representations and collections. A common method for point clouds is performing (weighted) vertex-wise feature averaging \cite{qi2017pointnet, qi2017pointnet++}.
When neglecting the surface structure defined by the faces, one can apply this approach to the vertices only.
However, this approach is highly sensitive to the distribution of vertices in 3D space, and it cannot guarantee that features of different shapes lie in the same manifold a priori. An alternative approach is to use a template and analyze the features with respect to the template \cite{Ganapathi2018, Garcke2022, Kendall1989}. However, the use of a template can introduce bias. To avoid this, some methods construct a new 3D template shape that resembles the centroid of the collection \cite{Joshi2004}. In our work, we avoid constructing such a shape by using the limit shape basis CCLB \cite{Huang2019}. This approach defines a latent shape in the spectral space to which all shapes are connected via a functional map, thereby avoiding embedding it in the ambient space and introducing potential bias.

\section{Motivation, Notation \& Background}

In this section, we express our motivation for creating an autoencoder that can process meshes with varying connectivities. This is a shift from traditional autoencoders, which mainly work with point clouds and meshes with fixed connectivities. We also touch upon the functional map framework and the idea of limit shape construction. These concepts are crucial to our proposed method. To make the paper easier to follow, we use the same notation throughout.

\subsection{Motivation}

Autoencoders have made significant strides in learning compact representations across various data types. In the 3D domain, they have been particularly successful with point cloud data due to its permutation-invariance property, streamlining the encoding and decoding processes \cite{Achlioptas2018,pang2022masked,wang2021unsupervised,yang2018foldingnet}. However, when applied to triangular meshes, this strength becomes a limitation.

Triangular meshes, in contrast to point clouds, encapsulate the detailed geometry and topology of 3D surfaces, essential for applications like computer graphics where accurate 3D representations underpin realistic renderings. They impose an inherent structure on the 3D data, encoding both geometric and topological relationships among vertices. Unlike the permutation-invariant nature of point clouds, the order in triangular meshes is pivotal as it defines the mesh's connectivity. Tampering with this order could obliterate connectivity data, thereby diminishing the mesh's representational utility. This distinctive characteristic of meshes makes tailoring autoencoders for them notably challenging.

The prevailing approaches to address this challenge often assume that all meshes maintain a 1-to-1 correspondence, meaning they possess identical mesh connectivity \cite{Ranjan2018,Bouritsas2019,Zhou2020}. While this perspective facilitates preserving mesh structures during encoding and decoding through mesh resampling, it also restricts the versatility of these methods. In practice, a strict 1-to-1 correspondence is an exception rather than the rule. Forcing diverse meshes into identical connectivity introduces intricate challenges, often necessitating manual fine-tuning. Such remeshing might produce distortions, undermining the original mesh's quality. Furthermore, when the mesh structure encapsulates salient features about an object, remeshing might not be just unfeasible but also undesirable.

Another ambition in the field is to situate the meshes within a shared embedding space, allowing for both comparative and manipulative operations on the shapes. Contemporary mesh autoencoders, however, hinge on fixed mesh connectivity to form this shared space \cite{Hahner2021,Hahner2022b,Ranjan2018,Bouritsas2019}. 

Motivated by these challenges, our work seeks to develop a novel Mesh Autoencoder (MeshAE) capable of handling arbitrary triangular meshes, thereby eliminating the need for 1-to-1 correspondence, and representing them in a joined embedding space.

\subsection{Notation}
We consider a 3D shape $S_i$, represented as a triangular mesh comprising $n_i$ vertices. We obtain its cotangent Laplace-Beltrami decomposition \cite{laplace} and represent the first $k$ eigenvectors of $S_i$ in the matrix $\Phi_i \in \R^{n_i \times k}$. Additionally, we construct a diagonal matrix $\Delta_i \in \R^{k \times k}$, with its diagonal elements containing the first $k$ eigenvalues of $S_i$. We also define the diagonal matrix of area weights as $M_i \in R^{n \times n}$. It should be noted that $\Phi_i$ is orthogonal with respect to $M_i$ and that $\Phi_i^{\top} M_i \Phi_i = \idd{k}$, where $\idd{k}$ denotes the $\R^{k \times k}$ identity matrix. We further denote $\Phi_i^{\dagger} = \Phi_i^{\top} M_i$ and use the (left) Moore-Penrose pseudo-inverse symbol, $\cdot^{\dagger}$, to represent it.

\subsection{Functional map pipeline}
We use the notation $S_1$ and $S_2$ to refer to a source and target shape, respectively. The pointwise map $T_{12}: S_1 \rightarrow S_2$ is defined as the function that maps each vertex in $S_1$ to a corresponding vertex in $S_2$. To represent this map, we use the matrix $\Pi_{12} \in \R^{n_1 \times n_2}$, which takes the value 1 if $T_{12}(i) = j$, and 0 otherwise. However, with an increasing number of vertices in the shapes, the size of the matrix $\Pi_{12}$ grows quadratically, which is computationally infeasible.

To address this issue, we adopt the functional map paradigm proposed in \cite{Ovsjanikov2012}. This approach reduces the dimensionality of $\Pi_{12}$ by representing it in the spectral basis. Specifically, we construct the functional map $C_{21}$, which maps functions defined on $S_2$ to functions defined on $S_1$, using the expression $C_{21} = \Phi_1^{\dagger} \Pi_{12} \Phi_2$. The functional map has a small size of $(k \times k)$, with $k$ usually around 30, making the optimization process feasible.

To find the functional maps that map $S_1$ and $S_2$, we first obtain two $d$-dimensional feature functions, also known as probes, $F_1$ and $F_2$ defined on $S_1$ and $S_2$ respectively ($F_i \in \R^{n_i \times d}$). We then compute the coefficients $\A_i$ of the feature functions in their corresponding reduced basis using $\A_i = \Phi_i^{\dagger} F_i$. Next, we formulate an optimization problem: 
\begin{align}
\argmin_{\C} \| \C \A_1 - \A_2 \|_F^2,
\label{eq:fmap_basic}
\end{align}
where $\C$ is the sought-after functional map. 

\subsection{Canonical Consistent Latent Basis} 
\label{sec:FMN}

Given a collection of related  3D shapes $S_1, \dots, S_n$, and a set of functional maps between some shape pairs, we build a functional map network on the collection as follows. We construct a graph $\mathcal{G} = (\mathcal{V}, \mathcal{E})$, where the $i$-th vertex represents the functional space of the shape $S_i$, and the edge $(i,j)$ exists if the functional maps $C_{ij}$ and $C_{ji}$ are given, in which case, the graph is symmetric. We assume that our graph is connected, which means that there exists a path between any two shapes in the collection.

With this construction in hand, we can translate functions between any shapes $S_i$ and $S_j$ in the shape collection. Nevertheless, we do not have a common basis. We solve this by using the limit shape construction as in \cite{Wang2013}, which provides a  latent basis $Y_i$ for the collection's shape features, such that $C_{i,j} Y_i \approx Y_j, \forall i,j$. 
These latent bases $Y_i \in \R^{k_1 \times k_1}$ ($k_1$ is the same dimension of the functional maps) can be interpreted as functional maps from a \textit{latent shape} to each shape $S_i$.

To further enhance the stability of this construction and eliminate shape metric ambiguity, \cite{Huang2019} introduced the canonical consistent latent basis (CCLB) $\widetilde{Y_i} \in \R^{k_1 \times k_2}$, which has been shown to yield better results. The CCLB enables unbiased comparisons of the shape features in the collection. Therefore, we use this common basis to define the embedding space of our autoencoder, which captures the diversity of our shape collection.

\section{Method}

In this section, we introduce our proposed model for shape representation and generation, resolving the challenges motivated in the previous section. 
For that, we introduce a novel spectral mesh pooling and present an unsupervised learning method of functional maps to construct point-to-point maps between a collection of shapes, 
This is the first stage of our approach, for which we provide an overview in Figure \ref{fig:pipeline}. 
The second stage of our model is an autoencoder making use of the novel spectral pooling.
We will publish our complete code and data.

\begin{figure*}[t]
\begin{tabular}{C{0.1\linewidth} C{0.12\linewidth} C{0.13\linewidth} C{0.12\linewidth} C{0.12\linewidth} C{0.12\linewidth} C{0.1\linewidth}}   
GT & CoMA  & Neural3DMM & MeshConv & MASER & CoSMA & Ours\\
\end{tabular}
{\includegraphics[width=\linewidth, trim=0cm 10cm 0cm 0.5cm, clip]{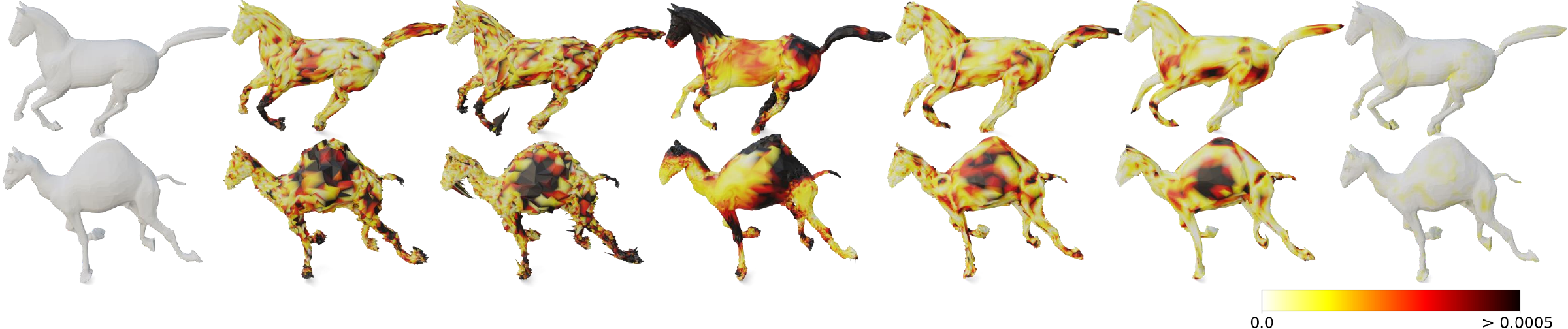}} 
\caption{Reconstructed meshes from the \textit{GALLOP} dataset. Vertex-wise MSE is highlighted.}
\label{recon_gallop}
\vspace{-1.2em}
\end{figure*}

\subsection{Spectral Pooling}
\label{sec:spectral_pooling}
We develop a spectral mesh pooling operator to reduce the dimensionality of the meshes in the spectral domain to handle meshes of different connectivity and represent them in a joined low-dimensional embedding space. 

In the case of classical representation learning for 2D images with convolutional networks, all image samples have a fixed size and are in 1-to-1 correspondence. 
The convolutional filters calculate vertex-wise features, then pooling summarizes many vertex-wise features, reducing the number of pixels.  
This is done uniformly for all the images in correspondence, and hence, features from different samples are comparable to each other. 
Therefore, pooling in 2D can also be interpreted as a projection from a high dimensional basis to lower dimensional basis functions. Here, the cardinality of the basis is equal to the number of pixels. 
Because of the 1-to-1 correspondence, all the images are described in the same basis. 
A similar pooling operator cannot be constructed for meshes with different mesh connectivities. 
We can only obtain point-to-point maps between the shapes that allow the projection of a function from one shape to another.

To solve the pooling for meshes, we propose to adapt the CCLB method (initially developed for deformation detection) and introduce a novel intrinsic spectral mesh pooling.
We project vertex-wise features that are calculated for every shape separately to the common CCLB basis, reducing the dimension from the number of vertices to the size of the limit shape. 
We calculate the limit shape basis CCLB as described in \ref{sec:FMN}. 
It has dimension $k_2$ and uses eigendecompositions of the Laplacians of size $k_1 \geq k_2$. It will be the common basis for the low-dimensional embedding space.
For the spectral unpooling, we project the features from the limit shape basis back to the vertex representation.

Note that in the special case, when the dimension of the limit shape equals one ($k_1=k_2=1$), the spectral pooling corresponds to a global $\pm$ mean pooling for all the shapes in the collection. 
The spectral unpooling duplicates the average feature into the vertices of the shape.
We formally state and prove this observation in the supplementary materials.

\subsection{Unsupervised Maps Extraction}
\label{sec:stage1}
This step aims to generate point-to-point (p2p) maps between a collection of shapes for training the autoencoder. While most mesh autoencoder requires p2p supervision, obtaining p2p maps is challenging since it requires significant labeling effort, which is prohibitive. To overcome this challenge, we propose learning approximate p2p maps in an unsupervised manner and introducing additional regularization in the loss to rectify the defaults in the maps.

To achieve this, we choose to learn the maps using the unsupervised functional maps setting. We followed the approach of \cite{donati2020deep} by training a DiffusionNet network \cite{Sharp2020} to generate feature functions that will estimate a functional map between a source and a target shape. We supervise the training by imposing structural properties on the functional maps. 

Specifically, given a source and target shape $S_1$ and $S_2$, we first extract $d$-dimensional feature functions $F_1$ and $F_2$, respectively using DiffusionNet. We then project these features onto the reduced Laplacian basis $\A_i = \Phi_i^{\dagger} F_i$. Next, we estimate the functional map between $S_1$ and $S_2$ using:
\begin{align}
\argmin_{\C} \| \C \A_1 - \A_2 \|_F^2 + \lambda \|\C \Delta_1 - \Delta_2 \C\|_F^2.
\label{eq:fmap_network}
\end{align}

The second term is a regularization that promotes the isometry of the maps, as described in \cite{Ovsjanikov2012}. This operation is differentiable. To train the network, we predict the functional map in both directions (\ie, $\C_{12}$ and $\C_{21}$) and then penalize the deviation of the predicted maps from bijectivity and orthogonality. 
The first loss requires the maps to be the inverse of each other, while the second loss regularizes the maps to be locally area-preserving, as several previous works demonstrate \cite{sharma2020weakly,roufosse2019unsupervised}. We can write these losses as:
\begin{align}
L = \| \C_{12}\C_{21} - I \|_F^2 + \sum_{i,j \in \{1, 2\}} \| \C_{ij}^{\top}\C_{ij} - I \|_F^2
\label{eq:fmap_loss}
\end{align}

Once the network is trained, we extract functional maps between all pairs of shapes and convert them to p2p maps. To improve the quality of the maps, we use the recent refinement method ZoomOut \cite{Melzi_2019}. This method navigates between the spectral and spatial domains while progressively increasing the number of spectral basis functions. The final maps are then used to train the autoencoder.

\subsection{Our Architecture}
\label{sec:stage2}

Given a shape collection of meshes that can have different connectivity, we define our architecture employing the contributions explained in the previous paragraphs.
Using existing (ground truth) or unsupervised learned point-to-point maps (as in Section \ref{sec:stage1}), we calculate functional maps between the shapes and then construct the functional map network, as well as the limit shape basis CCLB for the introduced spectral pooling.
In addition, we chose a set of template meshes from the collection for the different categories of meshes, which will be used for the reconstructions.

Our autoencoder makes use of the surface-based convolutional network DiffusionNet \cite{Sharp2020}, which has proven to learn discretization agnostic vertex-wise shape features. 
We input the vertex 3D coordinates of shape $S_i$ to the \textbf{encoder}. Four trainable DiffusionNet Blocks \cite{Sharp2020} are applied to calculate $F$-dimensional vertex-wise features. Then we apply spectral pooling, and these features are projected to the CCLB by multiplying them from the left by $Y_i^{\dagger} \Phi_i^{\dagger}$. This low-dimensional representation $z_i$ of dimensionality $F \cdot k_2$ is now independent of the mesh connectivity of $S_i$ because it is represented in the common CCLB basis. 

The \textbf{decoder} applies spectral unpooling and projects the features represented in the CCLB to the template shape $S_t$ by multiplying it by $\Phi_t Y_t$ from the left. 
At this point, we concatenate the vertex-wise 3D coordinates of the template shape to the projected features to provide more information for the reconstruction of the input shape. 
Finally, four trainable DiffusionNet Blocks reconstruct the 3D coordinates of the input shape on the template mesh's vertices.

\subsection{Losses}
Our autoencoder is fully differentiable, and we denote the input shape as $S$. The encoder and the decoder are respectively represented as $enc$ and $dec$, the reconstruction is  $X = dec(enc(S))$. We train our network using two losses.

\textbf{Point-to-point (p2p) loss}: Given a point-to-point map $\Pi$ (either ground truth or extracted by the first stage) between the template and the input shape, the p2p loss is defined as $L_1 = \|\Pi S - X\|_F^2$. However, in the case of unsupervised maps, this loss may provide inaccurate signals as the p2p map is often faulty and not entirely correct. To address this issue, we use an additional loss.

\textbf{Reconstruction loss}: Given the reconstruction $X$, we construct the matrix $D^X$ such that $D^X_{i,j} = \|X_i - X_j\|_F^2$. We create the matrix $D^S$ for $\Pi S$ in the same manner. The reconstruction loss is $L_2 = \|D^S - D^X\|_F^2$. This loss computes the cumulative reconstruction error and each point receives reconstruction feedback from the other $n-1$ points. Thus, even if the p2p map is faulty in some places, the faulty points receive signals from the non-faulty ones. As this loss is rotation invariant, it cannot be used alone. Our final loss combines the two losses: $L = L_1 + \lambda L_2$.

\section{Experiments}

In this section, we evaluate our architecture on various tasks using three different shape collections.





\begin{table*}[t]  
    \centering
    \ra{1.0}
        \resizebox{\linewidth}{!}{
            \begin{tabular}{@{}lccrrrrrc@{}}

\toprule
& & & \multicolumn{2}{c}{\textit{FAUST}} & \multicolumn{3}{c}{\textit{GALLOP}} & \textit{TRUCK} \\
Method  & Unsupervised & No remesh & Unknown poses & Unknown indiv. & Camel & Elephant & Horse & ($\times$100) \\
\cmidrule(r{.5em}){1-3}\cmidrule(lr{.5em}){4-5}\cmidrule(l{.5em}){6-8} \cmidrule(l{.5em}){9-9}
CoMA & \xmark & \cmark & 
    569.3 $\pm$ 203.1 & 28.3 $\pm$ 6.4 & 7.8 $\pm$ 1.4 & 24.3 $\pm$ 4.4 & 3.2 $\pm$ 0.3 & - \\
Neural3DMM & \xmark & \cmark &
    246.2 $\pm$ 5.4 & 10.4 $\pm$ 0.9 & 12.4 $\pm$ 0.1 & 29.7 $\pm$ 3.5 & 4.7 $\pm$ 0.1 & - \\
MeshConv & \xmark & \cmark &
    18.2	$\pm$	2.2 & 3.5	$\pm$	0.4 & 9.2	$\pm$	0.4 & - \textsuperscript{\ref{meshconv_ele}} & 7.3	$\pm$ 0.1 & - \\
MASER & \xmark & \xmark & 
2.8	$\pm$	0.2  & 1.3	$\pm$	0.1 & 4.2	$\pm$	0.03 & 17.8	$\pm$	0.8 & 1.8	$\pm$	0.02 & 187.8 $\pm$ 11.7\\
CoSMA & \xmark & \xmark & 
\textbf{1.1	$\pm$	0.01}  & 0.9	$\pm$ 0.02 & 3.3	$\pm$	0.01 & 20.0	$\pm$ 0.3 & 1.2	$\pm$	0.01 & 15.7 $\pm$ 0.5\\
Ours - supervised & \xmark & \cmark &
    2.4 $\pm$ 0.08 & \textbf{0.7 $\pm$ 0.01} & \textbf{0.3 $\pm$ 0.03} & \textbf{1.09 $\pm$ 0.08} & \textbf{0.11 $\pm$ 0.01} & \textbf{1.01	$\pm$	0.1}\\
\addlinespace
Global pooling & \cmark & \cmark & 475 $\pm$  26.2 & 35.5 $\pm$ 1.0 & 25.2 $\pm$ 5.6 & 22.4 $\pm$ 0.2 & 3.9 $\pm$ 0.9 & 14.6 $\pm$	1.9\\ 
Ours - unsupervised & \cmark & \cmark & \textbf{4.3 $\pm$ 0.1} & \textbf{2.0 $\pm$ 0.05} & \textbf{6.8 $\pm$ 0.3} & \textbf{21.0 $\pm$ 0.2} & \textbf{1.2 $\pm$ 0.02} & \textbf{10.9	$\pm$	0.6} \\
\bottomrule

            \end{tabular}
        }

\caption{MSE errors between the reconstructed and original mesh of the \textit{FAUST}, \textit{GALLOP}, and \textit{TRUCK} datasets. The reported numbers are mean errors over 3 runs randomly initialized. $\pm$ denotes the standard deviation.}
\label{tab:mse_faust}
\vspace{-1.3em}
\end{table*}

\subsection{Shape Collections}

We conduct experiments using three distinct datasets previously utilized in recent studies \cite{Hahner2021,Hahner2022b}. 

The \textit{GALLOP} shape collection contains triangular meshes representing a motion sequence with 48 timesteps from a galloping horse, elephant, and camel \cite{Sumner2004}. The galloping movement is similar but the meshes representing the surfaces of the three animals differ in connectivity and the number of vertices. We use the last 14 timesteps for testing.

The \textit{FAUST} collection contains 100 meshes \cite{Bogo2014}. 
The irregular surface meshes represent 10 different bodies in 10 different poses. 
We apply two different train-test splits, following previous works \cite{Hahner2022b}. 
In the first setting, known as ``unknown poses'', the network is trained on 8 poses out of 10, and tested on the remaining 2, while in the second setting, known as ``unknown individuals'', the network is trained on 8 individuals and tested on the remaining 2.

The  \textit{TRUCK} shape collection \cite{Hahner2022b} contains 32 completed frontal car crash simulations of 6  different components \cite{NCAC}. 
Only 10 simulations are included in the training set. 
In this dataset, the components represented by surface meshes often deform in different patterns during the crash. 
One goal is to detect clusters corresponding to different deformation patterns in the components’ embeddings in order to speed up the analysis of car crash simulations \cite{Bohn2013}.

\subsection{Results}

\begin{figure}[t]
\small
\begin{tabular}{C{0.07\linewidth} c @{\hspace{1em}} c @{\hspace{1em}} c @{\hspace{0.8em}} c @{\hspace{1em}} c C{0.05\linewidth}}
\footnotesize
  \multirow{2}{*}{GT}  & \multirow{2}{*}{CoMA}  & Neural & Mesh &
  \multirow{2}{*}{MASER} & \multirow{2}{*}{CoSMA} & \multirow{2}{*}{Ours}\\
   & & 3DMM & Conv\\
\end{tabular}
   {\includegraphics[width=\linewidth, trim=0cm 8.5cm 13.9cm 0cm, clip]{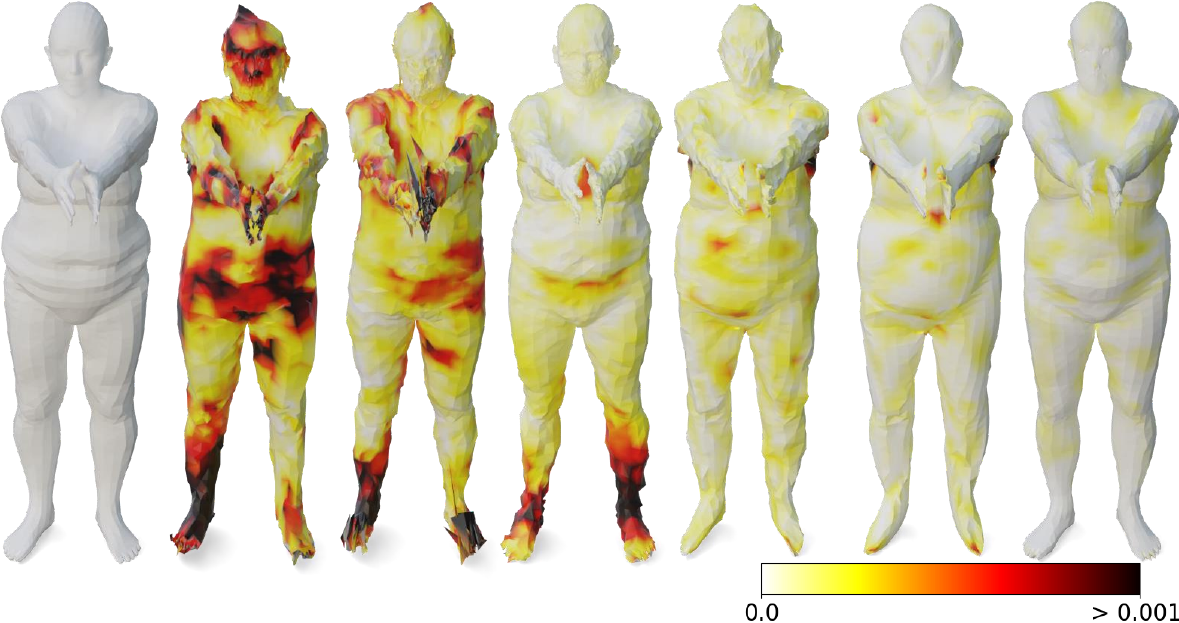}} 
\caption{Reconstructed meshes from the \textit{FAUST} dataset of the "unknown individuals" setup. Vertex-wise MSE is highlighted.}
\label{recon_faust}
\vspace{-1em}
\end{figure}

\begin{figure}[t]
\centering
   {\includegraphics[width=0.9\linewidth, trim=0cm 8.5cm 14cm 0cm, clip]{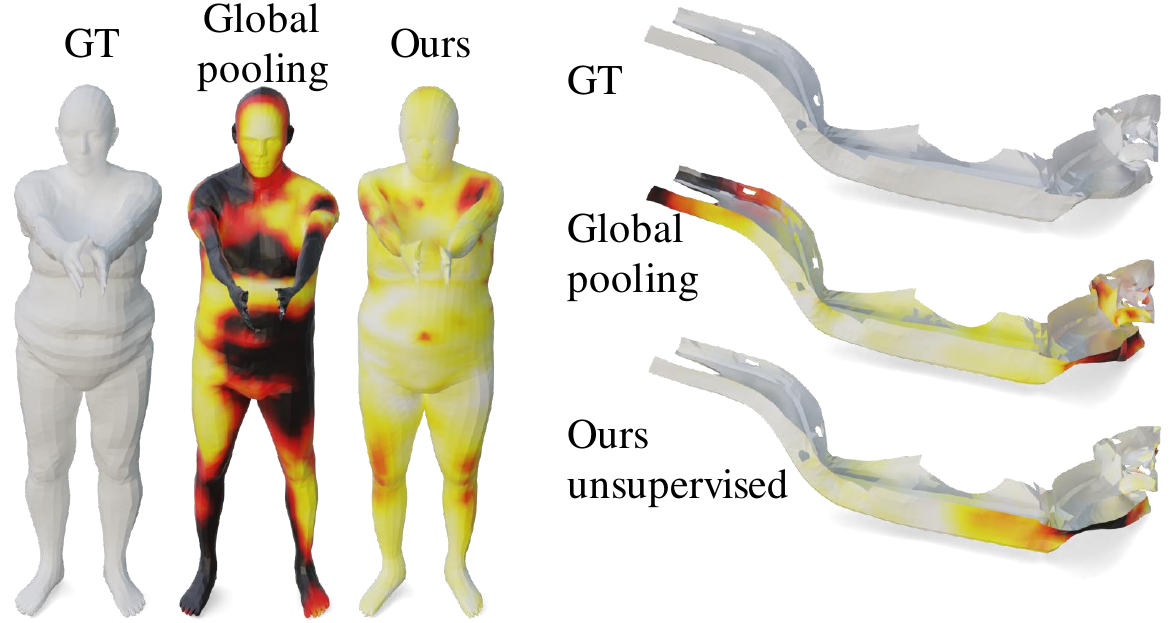}} 
\caption{Reconstructions from the unsupervised experiments on the \textit{TRUCK} and \textit{FAUST} datasets. Vertex-wise MSE is highlighted using the same color range as for the supervised experiments.}
\label{recon_unsup}
\vspace{-1.2em}
\end{figure}

We compare our method to five recent baseline architectures: CoMA \cite{Ranjan2018}, Neural3DMM \cite{Bouritsas2019}, MeshConv \cite{Zhou2020}, MASER \cite{Hahner2021}, and CoSMA \cite{Hahner2022b}. 
For all baseline autoencoders, we chose embedding sizes following previous works. 
The first three do not allow an analysis of meshes with different mesh connectivity by the same trained architecture. 
The latter two methods allow an analysis of different meshes with different connectivity after being remeshed to a semi-regular mesh representation by inputting patches of the meshes to the AE. 
Nevertheless, their shape features depend on the semi-regular mesh connectivity; hence, the embedding space is not joint.
The reconstructed semi-regular meshes are projected back to the original meshes using a parametrization to calculate the error.
All these baseline mesh AE are supervised, so we compare our approach to them using supervised point-to-point maps. 

The second is to train the autoencoder using the unsupervised maps produced by the first stage, see section \ref{sec:stage1}.
As a comparison, we construct a baseline method that uses unsupervised point-to-point maps and global average pooling instead of the introduced spectral mesh pooling. 
This corresponds to the case when the dimensionality of the CCLB is 1 ($k_1=k_2=1$), see section \ref{sec:spectral_pooling}.

\begin{figure*}[t]
\centering
   {\includegraphics[width=0.37\linewidth]{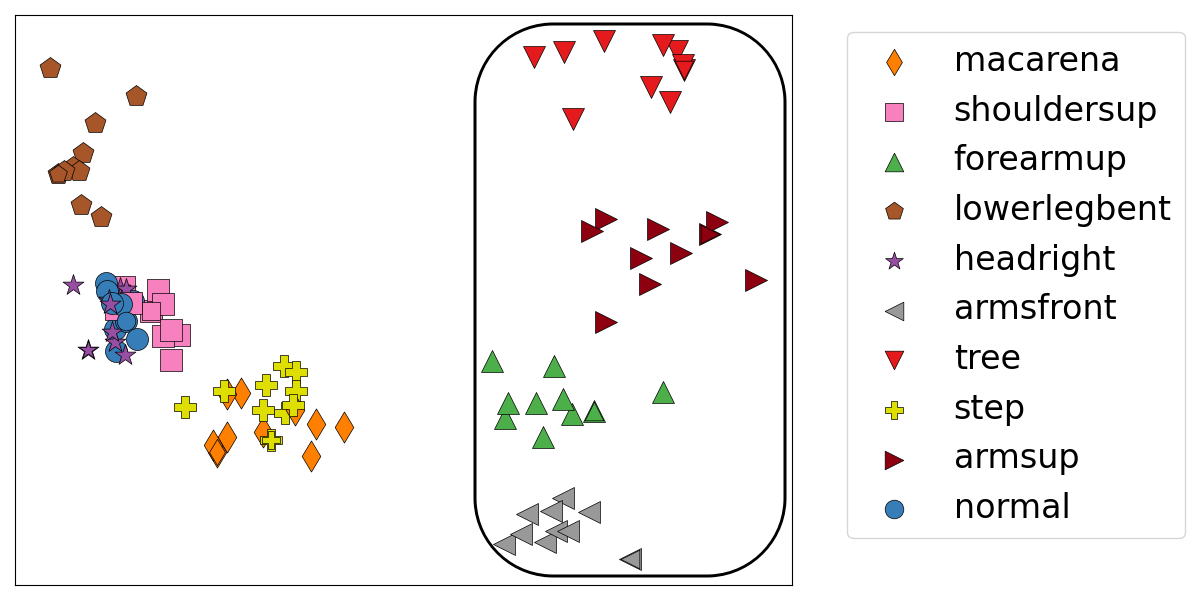}}
   \hspace{1em}
   {\includegraphics[width=0.25\linewidth]{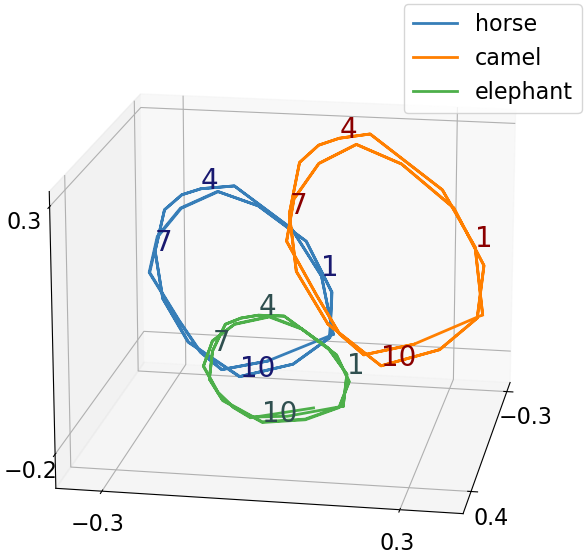}} 
   \hspace{1em}
   {\includegraphics[width=0.28\linewidth]{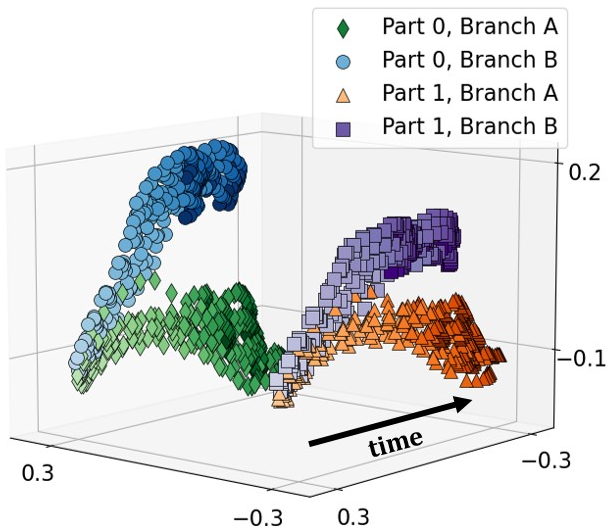}}
\caption{Embeddings in 3D or 2D of the learned representations in the common basis. 
Left: \textit{FAUST} positions marked with a triangle raise the arms.
Middle: galloping sequences from the \textit{GALLOP} dataset with timesteps provided in the plot. 
Right: two \textit{TRUCK} components deform in two clusters over time, corresponding to the deformation patterns (Branch A and B) visualized in Figure \ref{fig:recon_truck}.   
}
\label{emb_gallop}
\vspace{-1em}
\end{figure*}

\footnotetext[1]{\label{meshconv_ele}MeshConv AE for the elephant is too large to train on 40 GB GPU.}

\subsubsection{Mesh Reconstructions}

We initiate our analysis by conducting a conventional reconstruction experiment. 
First, we encode a shape $S$ from the test set, which was never seen during the training phase, into a latent code. 
This is decoded subsequently using our decoder.
We compare the output to the initial shape to assess the reconstruction. 
We sum up the vertex-wise mean squared errors (MSE) between the vertex coordinates of the input shapes and their reconstructions to determine the reconstruction error. 
To obtain uniform results, we normalize all meshes into the range $[-1,1]$. 
We report all reconstruction errors in Table \ref{tab:mse_faust}.

For the \textit{FAUST} dataset, our supervised method achieves the best result in the "unknown individuals" setting and the second-best result in the "unknown poses" setting (see Figure \ref{recon_faust} for a visualization). 
In addition, our results are more stable than some of the baselines, as indicated by the standard deviation. 
Our unsupervised results are better than the supervised results that do not require any remeshing, which demonstrates the usefulness of our approach and the regularization introduced by the losses to mitigate errors in the maps. 
Additionally, our spectral pooling strongly improves the reconstruction quality for the unsupervised experiments compared to using global pooling in the encoder, see Figure \ref{recon_unsup}.

For the \textit{GALLOP} dataset, we train our network on all categories in the supervised setting. 
However, due to the highly non-isometric nature of the three categories, most unsupervised methods for shape matching fail. Thus, we train our unsupervised method on each category individually. 
The mesh-dependent baselines are also trained on each animal separately. Only MASER and CoSMA train on the three animals together since mesh patches are input separately. 
Our supervised method achieves the best results for all categories, see Table \ref{tab:mse_faust}. 
Reconstructed meshes are visualized in Figure \ref{recon_gallop}. 
Concerning our unsupervised method, it achieves comparable results with the baselines and outperforms the unsupervised global pooling approach. 
This demonstrates that learning high-quality mesh autoencoders is possible even in the absence of ground truth maps.

Finally, we report in Table \ref{tab:mse_faust} the result on the \textit{TRUCK} dataset. Due to its big size, we only test our method against the best two performing methods. 
Once again, our method achieves the best results in the supervised case.
Additionally, our unsupervised reconstruction quality is superior to all supervised baselines.
We provide visualizations of the reconstructed meshes in the supplementary material for the supervised and Figure \ref{recon_unsup} for the unsupervised methods.

Qualitatively, our reconstructed meshes are smooth, deform naturally, and do not have any outlier vertices, which is not the case for some baseline methods. 
The provided reconstructed meshes from all three datasets and supervised and unsupervised experiments in Figures \ref{recon_gallop} to \ref{recon_unsup} are smooth and have the lowest reconstruction error.

\subsubsection{Low-Dimensional Embeddings}

For every mesh from the collections, we obtain a hidden representation of size $k_2 \times F$. The shape features from the same collection can be visualized in 2D or 3D using a principal component analysis \cite{Pearson01}, see Figure \ref{emb_gallop}.

Similar to the other approaches, we embed the different shape categories separately from each other. 
In the case of the \textit{FAUST} dataset, several clusters form in the embedding space of the unsupervised experiment, which corresponds to different positions. Additionally, along the horizontal axis, the position of the arms can be split into raised or not raised.

Additionally, for the \textit{first time}, we can jointly visualize the features from various shapes of different connectivity in a common basis. 
It allows for a joint visualization of the galloping sequences of camel, horse, and elephant from the \textit{GALLOP} shape collection. 
The MASER and CoSMA baselines, on the other hand, only generate embeddings of every animal separately.
Figure \ref{emb_gallop} visualizes the learned features in 3D for the supervised experiment because the unsupervised one was conducted on the animals separately.
The sequences align over time up to translation but are still separated from each other, which captures the different shape categories.  

For two \textit{TRUCK} components, we aim to detect two clusters corresponding to a different deformation behavior, similarly to \cite{Hahner2022b}. 
These different \textit{TRUCK} components can, for the \textit{first time}, be visualized together using the representation in the CCLB. 
The two deformation branches in two different components are split along the same axis of the 3-dimensional embedding space, and the features of both components align over time, see Figure \ref{emb_gallop} for the embedding from the unsupervised experiment. 
This visualizes nicely that the deformation of the two components manifests in similar deformation patterns.

\begin{figure}[t]
\vspace{-1em}
\begin{center}
    {\includegraphics[width=0.9\linewidth, trim=8.7cm 4.8cm 4.8cm 4cm, clip]{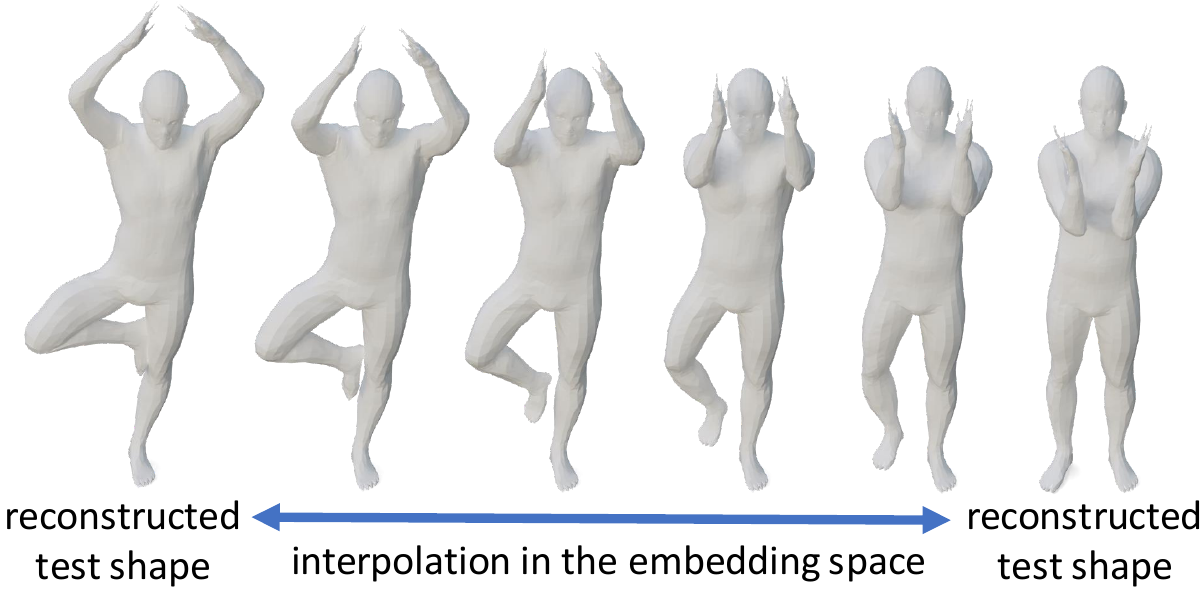}}
    \caption{Interpolating between different \textit{FAUST} test shapes.}
    \label{gen_faust}
\end{center} 
\vspace{-2em}
\end{figure}

\begin{figure}[t]
\vspace{-1em}
\begin{center}
    {\includegraphics[width=0.88\linewidth, trim=0cm 0cm 8.5cm 0cm, clip]{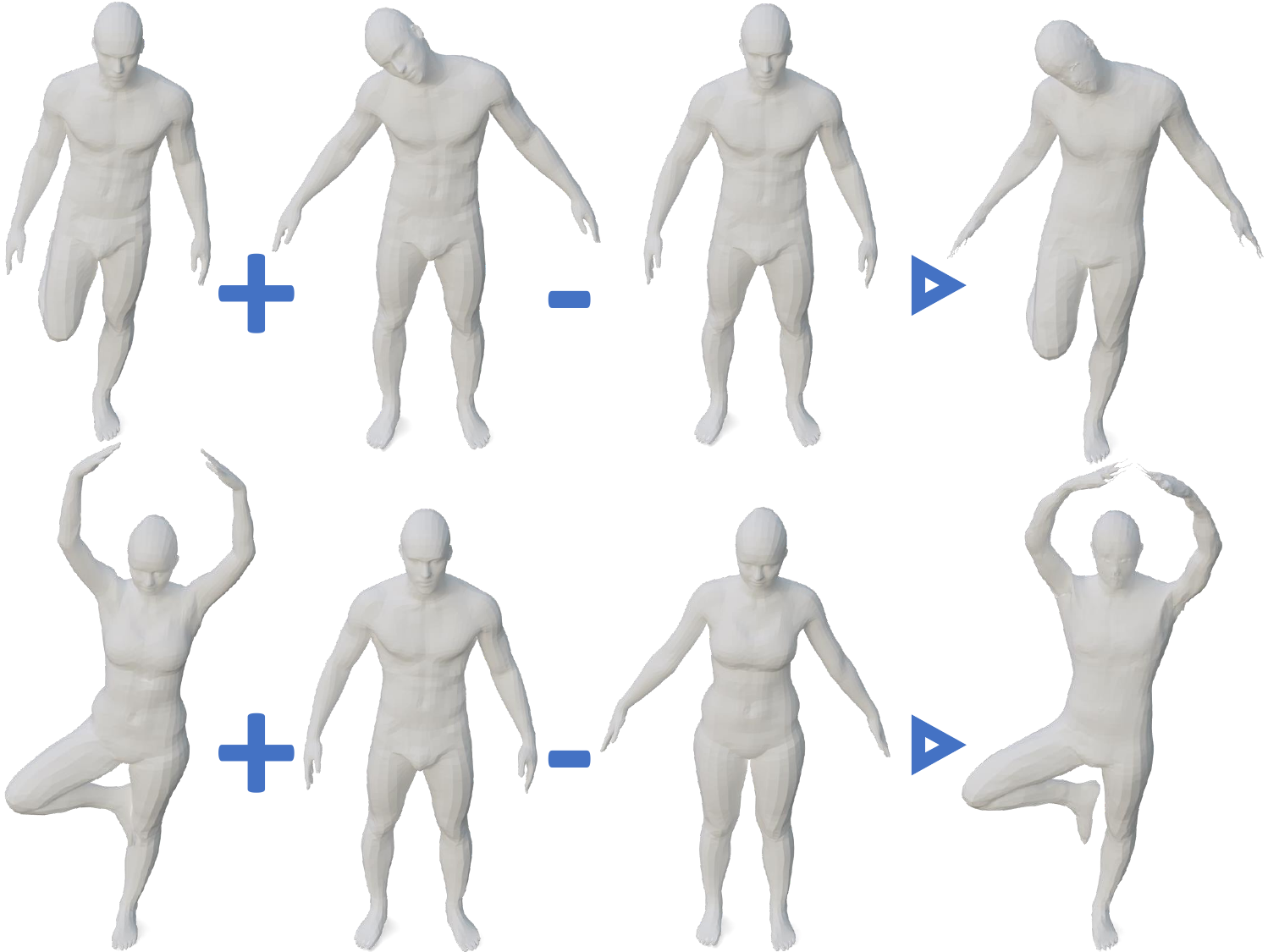}}
    \caption{Combining two positions of \textit{FAUST} test shapes (upper row) and transferring the pose from a female to a male individual.}
    \label{gen_faust2}
\end{center} 
\vspace{-2em}
\end{figure}

\subsubsection{Shape Generation and Manipulation}

To show that the shape features lie on a smooth manifold and that the network is not overfitting to the training samples, we generate new shapes by sampling from the latent feature space from the supervised ``unknown individuals'' setting. We conduct three different generative experiments on the \textit{FAUST} shape collection: interpolation of two test shapes (Figure \ref{gen_faust}), as well as generation of combined positions and feature transfer between two different bodies (Figure \ref{gen_faust2}). The figures show the smooth and well-formed generated shapes with correctly and naturally positioned limbs. While the feature transfer results can be compared to an actual shape from the collection, our interpolation and position combination experiments create well-formed samples that cannot be found in the shape collection. Additionally, the combination of positions and feature transfer shows that our embedding space allows algebraic manipulation (addition and subtraction) of shape embeddings. 

\section{Conclusion \& Limitations}

In this work, we introduce a novel unsupervised method for learning representations of diverse deformable shape collections. Our presented autoencoder architecture reconstructs shapes in higher quality than various baseline methods. 
Additionally, the computed features of meshes with different connectivity and from different categories lie in the same embedding space.
This smooth embedding space, which allows for interpolation and algebraic manipulation, motivates the application of spectral pooling for generative models.

One limitation of our work is that it does not yet handle shape collections with high non-isometry, such as the \textit{GALLOP} shape collection, where we were unable to learn good point-to-point maps between different classes (\ie, between horses and elephants). While our network uses a set of fixed templates for reconstruction, it would be interesting to investigate whether the decoder can generate multiple mesh topologies without the use of a template. 
We leave this as future work.

\paragraph{Acknowledgements}
The authors would like to thank the anonymous reviewers for their valuable suggestions. Parts of this work were supported by the ERC Starting Grant No. 758800 (EXPROTEA), the ANR AI Chair AIGRETTE, and the GlobalMathNetwork from the Hausdorff Center for Mathematics.

 \clearpage
 \newpage
 \appendix

\section{Supplementary Material}

We compile the results and discussions that were not accommodated in the main manuscript due to page constraints. 

Specifically, Section \ref{sec:imp_details} provides details on the implementation aspects of our pipeline. Section \ref{app:truck} elucidates our motivation for unsupervised feature learning on the \textit{TRUCK} dataset and offers additional reconstruction results. Section \ref{sec:ablation} introduces an ablation study concerning the components of our pipeline. Finally, Section \ref{app:pooling} delves into our interpretation of pooling in 2D and surface meshes, and it formulates and provides proof for the Lemma mentioned in Section 4.1 of the main text.

\subsection{Implementation Details}
\label{sec:imp_details}
For our experiments concerning the extraction of point-to-point maps in Section 4.2 of the main text, we use a functional map of size $k=30$. Concerning the feature extractor DiffusionNet~\cite{Sharp2020}, we use the default segmentation configuration provided by the authors\footnote{\url{https://github.com/nmwsharp/diffusion-net}}. After extracting the first set of functional maps, we refine them using ZoomOut \cite{Melzi_2019} using 30 iterations, from $k=30$ to $k=120$. For the Laplace-Beltrami computation, we use the cotangent discretization scheme \cite{Pinkall1993}. 

Concerning our autoencoder architecture in Section 4.3 of the main text, we use the same segmentation configuration of DiffusionNet for both the encoder and decoder. 
When using true point-to-point maps as supervision, we do not apply dropout inside the DiffusionNet blocks.
We chose $F$, the number of features output by the DiffusionNet layer in the encoder, and $k_2$, the dimensions of the CCLB, for all the shape collections in a way, such that the embedding dimension is approximately $k_2 \times F = 1024 $.

In all our experiments, we train our networks using Adam optimizer \cite{kingma2017adam} with an initial learning rate of 0.001.
For the autoencoder training losses in Section 4.4 of the main text, we use $\lambda = 10$. Concerning the reconstruction loss, due to the large size of the matrices $D^S$ and $D^X$, the shapes $X$ and $\Pi S$ are resampled to 20000 vertices if they are larger than it, only during the loss computation.

\subsection{\textit{TRUCK} shape collection and additional reconstruction results}
\label{app:truck}

In a car crash simulation, the different car components are generally represented by surface meshes, which makes our method applicable to this kind of data. 
From simulation run to simulation run, the car model parameters are modified to achieve multiple design goals, e.g. crash safety,
weight, or performance. 
Depending on the chosen model and simulation parameters, the car model often deforms in different patterns. 
Since the simulations nowadays contain detailed information for up to two hundred time steps and more than ten million nodes, their analysis is challenging and generally assisted by dimension reduction methods. 
One goal is the detection of clusters corresponding to different deformation patterns in the components’ embeddings. We visualize our 2D embedding of two components that deform in 32 simulations over time in Figure 5 in the main text. 
This way relations between model parameters and the deformation behavior are discovered more easily and the analysis of car crash simulations is accelerated \cite{Bohn2013,Hahner2020}.

We provide reconstruction results from the supervised experiment of a car component from the \textit{TRUCK} dataset in Figure \ref{fig:recon_truck}, which manifests two different deformation patterns over time, that are visible in the embedding space in Figure 5.

\begin{figure}[t]
\begin{tabular}{C{0.22\linewidth} C{0.2\linewidth} C{0.2\linewidth} C{0.2\linewidth} }
  GT & MASER & CoSMA & Ours\\
\end{tabular}
   {\includegraphics[width=\linewidth, trim=0cm 10cm 0cm 0cm, clip]{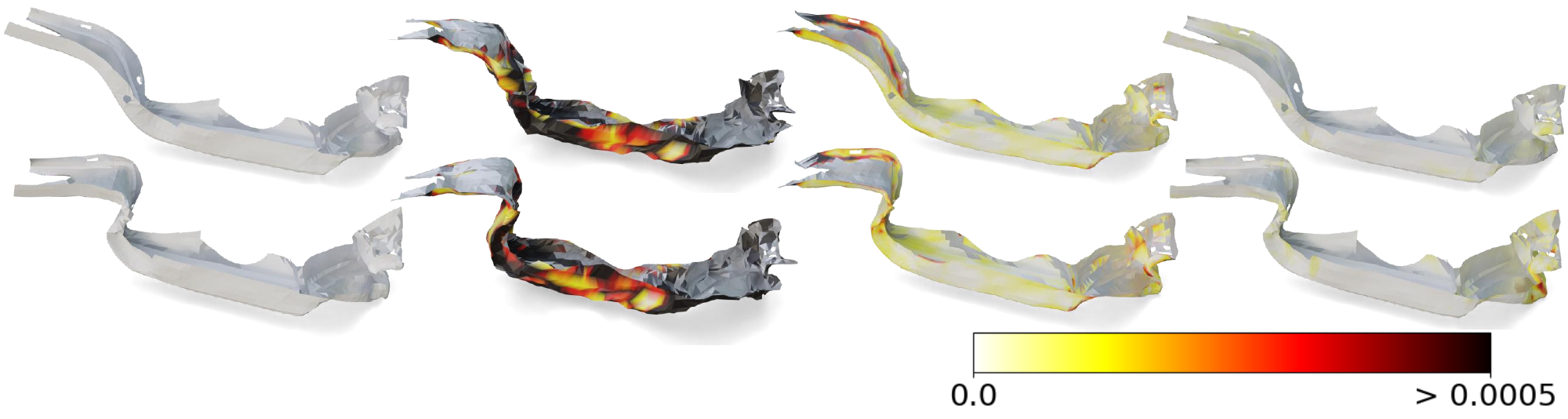}} 
\caption{Reconstructions of a car component, which deforms in two different patterns (first and second row), from the \textit{TRUCK} dataset. Vertex-wise MSE is highlighted.}
\label{fig:recon_truck}
\vspace{-1em}
\end{figure}

\subsection{Ablation Study}
\label{sec:ablation}

To validate our pipeline's components, we performed an ablation study.

First, we wanted to examine the role of spectral pooling and the reconstruction loss. It's important to note that the reconstruction loss can't be used by itself since it's rotation-invariant. For this experiment, we utilized the \textit{FAUST} dataset in an interpolation setting, mirroring Section 5.2.1. We carried out four experiments: the first one employed our complete pipeline with supervised maps; the second used unsupervised maps; the third operated with unsupervised maps but omitted the reconstruction loss; and the fourth involved supervised maps without the spectral pooling (instead, we opted for global pooling as presented in Table 1 of the main text, following the approach of previous studies like \cite{Litany2018}). The outcomes are presented in Table \ref{tab:ablation}. They indicate that each component is crucial for achieving the best results. Notably, the spectral mesh pooling's contribution to the combined embedding space is significant; using just global pooling leads to a marked drop in performance.

\begin{table}[h]  
    \centering
    \ra{1.0}
            \begin{tabular}{@{}lr@{}}

\toprule
Setting & \textit{FAUST} dataset  \\
\midrule
w/o limit shape &  16.7\\
w/o reconstruction loss & 4.3\\
with unsupervised  maps & 2.0 \\
with supervised maps & \textbf{0.7}\\
\bottomrule

            \end{tabular}
\caption{Ablation study on the component of our pipeline.}
\label{tab:ablation}
\end{table}

Secondly, we study the impact of the size of the projection on the limit shape $k_2$. To do this, we use the GALLOP dataset in a supervised setting similar to Section 5.2.1. 
We keep the size of the embedding space fixed (equal to 1024), which is determined by $k_2$, the size of the limit shape multiplied by the feature dimensions $F$ of the encoder. We increase $k_2$ monotonically from $k_2 = 1$ to $k_2 = 70$, while adapting the feature dimension $F$ accordingly. 
The results are summarized in Figure \ref{fig:size_limit}, which shows that the higher the dimension $k_2$ of the limit shape, the better the performance, corresponding to a bigger pooling in the spectral space. However, it can also be seen that performance starts to deteriorate with bigger $k_2$. 
This is explained by the fact that we keep the dimension $k_2 \times F $ of the embedding space fixed, and hence fewer features are extracted with higher $k_2$, which is not sufficient for encoding and high-quality decoding.

\begin{figure}[h]
\centering
   {\includegraphics[width=0.85\linewidth]{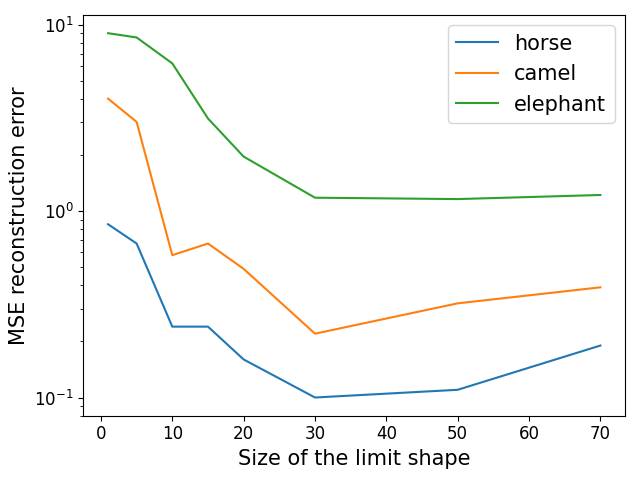}} 
\caption{Impact of $k_2$, the size of the limit shape. If $k_2=1$ this corresponds to global average pooling; see Section \ref{app:pooling}.}
\label{fig:size_limit}
\end{figure}

\subsection{Motivation: Pooling in 2D as a projection to a common basis and Lemma \ref{mean_pooling}} 
\label{app:pooling}

In Section 4.1 of the main text, we introduced a new spectral mesh pooling operator. This operator reduces the dimensionality of the meshes, enabling us to manage meshes with varying connectivity and represent them in a unified low-dimensional embedding space.

In the case of classical representation learning for 2D images with convolutional networks, one has a fixed-size grid and, in fact, all samples are in 1-to-1 correspondence. 
The convolutional filters with stride 1 calculate vertex-wise features, then pooling summarizes many vertex-wise features, going from $n$ pixels to $k$. 
This is done symmetrically for all the images and the features from different samples are comparable to each other because of the 1-to-1 correspondence. 
Let us consider the canonical basis for images of size $n=4 \times 4$, and the pooling size $k=2 \times 2$, in this case:
    \[
    \begin{pmatrix}
        1 & 0\\
        0 & 0
    \end{pmatrix}, 
    \begin{pmatrix}
        0 & 1\\
        0 & 0
    \end{pmatrix},
    \begin{pmatrix}
        0 & 0\\
        1 & 0
    \end{pmatrix},
    \begin{pmatrix}
        0 & 0\\
        0 & 1
    \end{pmatrix}.\]
is a common basis for all the images with $2 \times 2$ pixels.
The projection vector from $n$ pixels towards one of low-dimensional basis has ones in the corresponding corner. For the first common basis, it is 
    \[ \frac{1}{4} \begin{pmatrix}
        1 & 1 & 0 & 0\\
        1 & 1 & 0 & 0\\
        0 & 0 & 0 & 0\\
        0 & 0 & 0 & 0
    \end{pmatrix}, \]
displayed in the shape of the image.
Therefore, pooling in 2D can also be interpreted as a projection from $n$ dimensions to a set of common basis functions in $k$ dimensions. 
This projection reduces the dimensionality of the data while the dimensionality of the pixel-wise features stays the same.
Because of the 1-to-1 correspondence, all the images are described in the same basis. 
A similar pooling operator cannot be constructed for meshes with different mesh connectivities. 
We can only obtain point-to-point maps between the shapes that allow the projection of a function from one shape to another.

To solve the pooling for meshes, we propose to adapt the CCLB method (initially developed for deformation detection) and introduce a novel intrinsic spectral mesh pooling.
We project vertex-wise features that are calculated for every shape separately to the common CCLB basis, reducing the dimension from the number of vertices to the size of the limit shape. 
We calculate the limit shape basis CCLB as described in section 3.4. 
It has dimension $k_2$ and uses eigendecompositions of the Laplacians of size $k_1 \geq k_2$. It will be the common basis for the low-dimensional embedding space.
For the spectral unpooling, we project the features from the limit shape basis back to the vertex representation.

If the dimensionality of the CCLB is 1 ($k_1=k_2=1$), the projection of the shape features into the CCLB corresponds to a global $\pm$~mean pooling for all the shapes in the collection. 
Furthermore, the inverse of this operation duplicates the average feature into the shape's vertices, similar to upsampling in the 2D case. 
Also, the sign of the resulting global mean pooling function from all shapes in the shape collection to the CCLB is the same, which makes the different low-dimensional representations comparable to each other.
We formally state this observation in the following lemma.

\begin{lemma}
\label{mean_pooling}
If $k_1=k_2=1$, there are only two possible solutions for the projection $\tilde{Y}_i^{\dagger} \Phi_i^{\dagger} $ from the vertex-wise features to the CCLB for all shapes $S_i, i=1,2,\dots$ and the projection from the CCLB to a template shape $\Phi_t \tilde{Y}_t$. Either
\begin{equation}
    \tilde{Y}_i^{\dagger}  \Phi_i^{\dagger}  x_i = \mathrm{mean}(x_i) \hspace{1ex} \forall i \textrm{ and } \Phi_t \tilde{Y}_t = \mathbf{1}_{n_t} 
\end{equation}
or
\begin{equation}
\tilde{Y}_i^{\dagger}  \Phi_i^{\dagger} x_i = -\mathrm{mean}(x_i) \hspace{1ex} \forall i \textrm{ and } \Phi_t \tilde{Y}_t = -\mathbf{1}_{n_t} 
\end{equation}
 with $\mathrm{mean}:\R^{n_i\times d} \to \R^{d} $ is the vertex-wise average function, and $\mathbf{1}_{n_t}$ is  the column-vector with only ones in $\R^{n_t}$, and $n_t$ being the number of vertices of the template shape.
\end{lemma}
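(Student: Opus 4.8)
The plan is to exploit the fact that for $k_1 = 1$ the only Laplace--Beltrami basis function available is the constant one, so that every operator in the statement collapses to a scalar acting through the area measure. First I would pin down $\Phi_i$: for a connected mesh the lowest cotangent Laplacian eigenvalue is $0$ with a constant eigenfunction, so with $k_1 = 1$ the matrix $\Phi_i \in \R^{n_i \times 1}$ must be a scalar multiple of $\mathbf{1}_{n_i}$. The normalization $\Phi_i^\top M_i \Phi_i = \idd{1}$ then forces $\Phi_i = \tfrac{s_i}{\sqrt{A_i}}\,\mathbf{1}_{n_i}$, where $A_i := \mathbf{1}_{n_i}^\top M_i \mathbf{1}_{n_i}$ is the total surface area and $s_i \in \{+1,-1\}$ is the unavoidable eigenvector sign. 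Substituting into $\Phi_i^\dagger = \Phi_i^\top M_i$ yields, for any feature $x_i$, the identity $\Phi_i^\dagger x_i = s_i \sqrt{A_i}\,\mathrm{mean}(x_i)$, where $\mathrm{mean}(x_i) = (\mathbf{1}^\top M_i x_i)/A_i$ is the mass-weighted vertex average.

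Next I would compute the functional maps in this one-dimensional setting. Because any point-to-point matrix satisfies $\Pi \mathbf{1} = \mathbf{1}$, the expression $C_{ij} = \Phi_j^\dagger \Pi_{ji} \Phi_i$ collapses to the scalar $C_{ij} = s_i s_j \sqrt{A_j/A_i}$, i.e.\ a pure area ratio that is \emph{independent of the actual correspondence}. This is the crucial simplification: regardless of map quality, the functional-map network on the collection becomes completely explicit when $k_1 = 1$.

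I would then invoke the limit-shape / CCLB construction of Section~\ref{sec:FMN}. Consistency $C_{ij} Y_i = Y_j$ for the scalar latent bases reads $s_i s_j \sqrt{A_j/A_i}\, Y_i = Y_j$, which is equivalent to $Y_i/(s_i \sqrt{A_i})$ taking the same value $\epsilon'$ for every shape; hence $\tilde{Y}_i = \epsilon'\, s_i \sqrt{A_i}$. Consistency alone leaves the single global scalar $\epsilon'$ free, and the \emph{main obstacle} is to show that the canonical normalization built into the CCLB fixes $|\epsilon'| = 1$, so that $\epsilon' \in \{+1,-1\}$ and exactly two candidate bases survive. This is where I expect to spend the most care: one must read off the area normalization of the canonical latent basis from \cite{Huang2019} and verify that, specialized to a one-dimensional latent shape, it yields $\tilde{Y}_i^2 = A_i$ rather than some other scale.

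Finally I would substitute $\tilde{Y}_i = \epsilon'\, s_i \sqrt{A_i}$ with $\epsilon' = \pm 1$ back into the encoder and decoder operators. For the encoder, $\tilde{Y}_i^\dagger \Phi_i^\dagger x_i = \tfrac{1}{\epsilon' s_i \sqrt{A_i}} \cdot s_i \sqrt{A_i}\,\mathrm{mean}(x_i) = \epsilon'\,\mathrm{mean}(x_i)$, and for the decoder, $\Phi_t \tilde{Y}_t = \tfrac{s_t}{\sqrt{A_t}}\mathbf{1}_{n_t} \cdot \epsilon' s_t \sqrt{A_t} = \epsilon'\,\mathbf{1}_{n_t}$. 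The structural point to emphasize is that the per-shape sign $s_i$ cancels in both composites, so the operators are independent of the arbitrary eigenvector orientations and the only surviving freedom is the single global sign $\epsilon'$. Taking $\epsilon' = +1$ and $\epsilon' = -1$ gives precisely the two cases in the statement, and the argument is unchanged for $d > 1$ since every operator acts column-wise on the feature channels.
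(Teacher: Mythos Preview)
Your proposal follows the same skeleton as the paper's proof --- identify the constant eigenfunction, reduce the functional maps to scalars, solve the consistency equation for the latent bases, and then argue that only a single global sign survives --- but differs in two ways worth noting. First, you carry the area-weight matrix $M_i$ throughout and obtain the \emph{mass-weighted} mean; the paper instead works in a simplified discretization where $\Phi_i = \pm\mathbf{1}_{n_i}$ and $\Phi_i^\dagger x_i = \tfrac{1}{n_i}\Phi_i^\top x_i$, so that $\mathrm{mean}$ is the unweighted vertex average (this matches how the lemma statement phrases it). Second, your sign-consistency step is more direct: you show algebraically that the per-shape sign $s_i$ cancels in the composite $\tilde{Y}_i^\dagger\Phi_i^\dagger$, whereas the paper proves the same fact by contradiction, case-splitting on whether $\tilde{Y}_j^\dagger$ or $\Phi_j^\dagger$ is the factor that flips sign and deriving an inconsistency with $C_{ij}=C_{ji}\in\{-1,1\}$. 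Conversely, the paper is concrete exactly where you flag your ``main obstacle'': since the lone eigenvalue is $0$, the matrix $E$ in Algorithm~1 of \cite{Huang2019} vanishes, so its eigenvector $U$ is $\pm 1$ and hence $\tilde{Y}_i = Y_i U \in \{-1,1\}$ --- this is the normalization pinning down $|\epsilon'|$ that you said you would still need to extract from \cite{Huang2019}.
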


\begin{proof}
At first, we proof that $\tilde{Y}_i^{\dagger} \Phi_i^{\dagger} x_i = \pm \mathrm{mean}(x_i) $ for a fixed $i$.
If $k_1=k_2=1$ we have 
\begin{equation}
\Phi_i = \pm \textbf{1}_{n_i}
\end{equation}
being the eigenvector corresponding to the smallest eigenvalue $\Lambda_i = 0$, because the sum of all values in each row of the Laplacian $\La_i$ is 1.
Therefore,
\begin{equation}
\label{phiinv}
\Phi_i^{\dagger} x_i = \frac{1}{n_i} \Phi_i^T x_i = \pm \mathrm{mean}(x_i).
\end{equation}
The functional map 
\begin{equation}
\label{CC}
    C_{ij} = \Phi_j^T \Phi_i \in \R^{1 \times 1}
\end{equation}
is 1 or -1, projecting only constant functions from shape $j$ to shape $i$. It holds $C_{ij} = C_{ji}$.
If $k_1=1$, the optimization problem to compute the Consistent Latent Basis (CLB)
\begin{equation}
\label{CLB}
    \min_{Y} \norm{C_{ij} Y_i  \text{ - }  Y_j } \textrm{ s.t. } \sum_i Y^T_i  Y_i = I
\end{equation}
has the solutions:
\begin{equation} 
\begin{split}
\textrm{if } C_{ij} = C_{ji} = 1 & \Rightarrow Y_i = Y_j \in \{-1,1\}\\
\textrm{else } C_{ij} = C_{ji} = -1 & \Rightarrow Y_i = - Y_j \in \{-1,1\}.
\end{split}
\end{equation}
Since $\Lambda_i = 0$, the matrix E in algorithm 1 from \cite{Huang2019} is 0. Therefore, the possible solutions for its eigenvector $U$ are -1 and 1. For the calculation of the CCLB, this leads to
\begin{equation}  
\tilde{Y}_i = Y_i U \in \{1, -1\}.
\end{equation}
For the inverse it holds
\begin{equation} 
\label{Yinv}
\tilde{Y}_i^{\dagger} = \tilde{Y}_i. 
\end{equation}
From (\ref{phiinv}) and (\ref{Yinv}) follows
\begin{equation} 
\tilde{Y}_i^{\dagger} \Phi_i^{\dagger} x_i = \pm \mathrm{mean}(x_i)
\end{equation}
and all entries of the matrix have the same sign.

In a second step, we prove by contradiction that the non-zero entries of the matrix products $\tilde{Y}_i^{\dagger} \Phi_i^{\dagger}$ have the same sign for all $i=1,2,\dots$ .\\
Assume that the sign of $\tilde{Y}_i^{\dagger} \Phi_i^{\dagger}$ is different from the sign of $\tilde{Y}_j^{\dagger} \Phi_j^{\dagger}$ for $i \neq j$. 
Without loss of generality, assume the sign of $\tilde{Y}_i^{\dagger} \Phi_i^{\dagger}$ to be positive. 
Therefore, the sign of $\tilde{Y}_i^{\dagger}$ is the same as the sign of $\Phi_i^{\dagger}$.
Then, either $\tilde{Y}_j^{\dagger}$ or $\Phi_j^{\dagger}$ has a different sign.\\
If $\tilde{Y}_j^{\dagger} = - \tilde{Y}_i^{\dagger}$,  then ${Y}_j  = - {Y}_i$ and therefore $C_{ij} = C_{ji} = -1$ because $Y_i$ and $Y_j$ solve (\ref{CLB}). 
From (\ref{CC}) follows that $\Phi_j$ and $\Phi_i$ have different signs, which is a contradiction to $\Phi_i^{\dagger}$ having the same sign as $\Phi_j^{\dagger}$.\\
If in the other case $\Phi_i^{\dagger}$ has a different sign than $\Phi_j^{\dagger}$, $C_{ij} = C_{ji} = -1$ because of (\ref{CC}). It follows $Y_i = - Y_j$, which is a contradiction to $\tilde{Y}_j^{\dagger}$ having the same sign as $\tilde{Y}_i^{\dagger}$.

Finally, the entries of the matrix product $\Phi_t \tilde{Y}_t $, which projects the features from the CCLB representation to the template shape,
have the same sign as $\tilde{Y}_i^{\dagger} \Phi_i^{\dagger}$.

\end{proof}

 {\small
\bibliographystyle{ieeenat_fullname}
\bibliography{references}
}

\end{document}